\newtheorem{lemma}{Lemma}
\newtheorem{problem}{Problem}
\newtheorem{remark}{Remark}
\newtheorem{proposition}{Proposition}
\newtheorem{definition}{Definition}
\def\BibTeX{{\rm B\kern-.05em{\sc i\kern-.025em b}\kern-.08em
    T\kern-.1667em\lower.7ex\hbox{E}\kern-.125emX}}
\begin{document}

\title{Hierarchical Informative Path Planning via Graph Guidance and Trajectory Optimization}

\author{Avraiem Iskandar$^{1}$, Shamak Dutta$^{1}$, Kevin Murrant$^{2}$, 
Yash Vardhan Pant$^{1}$, and Stephen L. Smith$^{1}$%
\thanks{This work is supported by the National Research Council Canada (NRC).}%
\thanks{$^{1}$Department of Electrical and Computer Engineering, University of Waterloo, Waterloo, ON N2L~3G1, Canada (e-mails: \protect\url{{avraiem.iskandar, shamak.dutta, yash.pant, stephen.smith}@uwaterloo.ca}).}%
\thanks{$^{2}$National Research Council Canada (NRC), St. John's, NL, Canada (e-mail: \protect\url{kevin.murrant@nrc-cnrc.gc.ca}).}%
}

%
\maketitle

\newacronym{ipp}{IPP}{Informative Path Planning}
\newacronym{gp}{GP}{Gaussian Process}
\newacronym{nlp}{NLP}{Nonlinear Program}
\newacronym{cmaes}{CMA-ES}{Covariance Matrix Adaptation Evolution Strategy}
\newacronym{miqp}{MIQP}{Mixed Integer Quadratic Program}
\newacronym{micp}{MICP}{Mixed Integer Convex Program}
\newacronym{tl}{TL}{Temporal Logic}

\begin{abstract}
\label{sec:abstract}
We study informative path planning (IPP) with travel budgets in cluttered environments, where an agent collects measurements of a latent field modeled as a Gaussian process (GP) to reduce uncertainty at target locations. Graph-based solvers provide global guarantees but assume pre-selected measurement locations, while continuous trajectory optimization supports path-based sensing but is computationally intensive and sensitive to initialization in obstacle-dense settings. We propose a hierarchical framework with three stages: (i) graph-based global planning, (ii) segment-wise budget allocation using geometric and kernel bounds, and (iii) spline-based refinement of each segment with hard constraints and obstacle pruning.  By combining global guidance with local refinement, our method achieves lower posterior uncertainty than graph-only and continuous baselines, while running faster than continuous-space solvers (up to $9{\times}$ faster than gradient-based methods and $20{\times}$ faster than black-box optimizers) across synthetic cluttered environments and Arctic datasets.
\end{abstract}

\section{Introduction}
\label{sec:introduction}
Consider a specialized ice-breaker ship navigating the Arctic, equipped with sensors to collect measurements for building an accurate model of environmental properties (e.g., sea ice thickness, concentration, salinity, etc.) at specific locations of interest \cite{nicolaus2022overview}, as illustrated in Fig. 1.  Such an expedition is important for environmental monitoring to better understand the impacts of climate change. Ideally, the ship would visit each target location to gather measurements directly, yielding the most precise model. However, such a route would be excessively long and fuel-intensive. As a result, the vessel must plan its trajectory under a strict travel budget constraint. This trade-off between information gain and resource limitations defines the \gls{ipp} problem. The problem is ubiquitous and arises across a wide range of applications, including ocean monitoring \cite{smith2011persistent, hitz2017adaptive, binney2012branch}, 
air quality and atmospheric pollution monitoring~\cite{velasco2020adaptive, marchant2014bayesian},
terrain mapping \cite{popovic2020informative,quann2018ground},
 soil property estimation \cite{roberts2025kriging},
 and autonomous exploration in space missions~\cite{rockenbauer2024traversing}.

\begin{figure}[t]
    \centering
    \includegraphics[width=0.9\columnwidth]{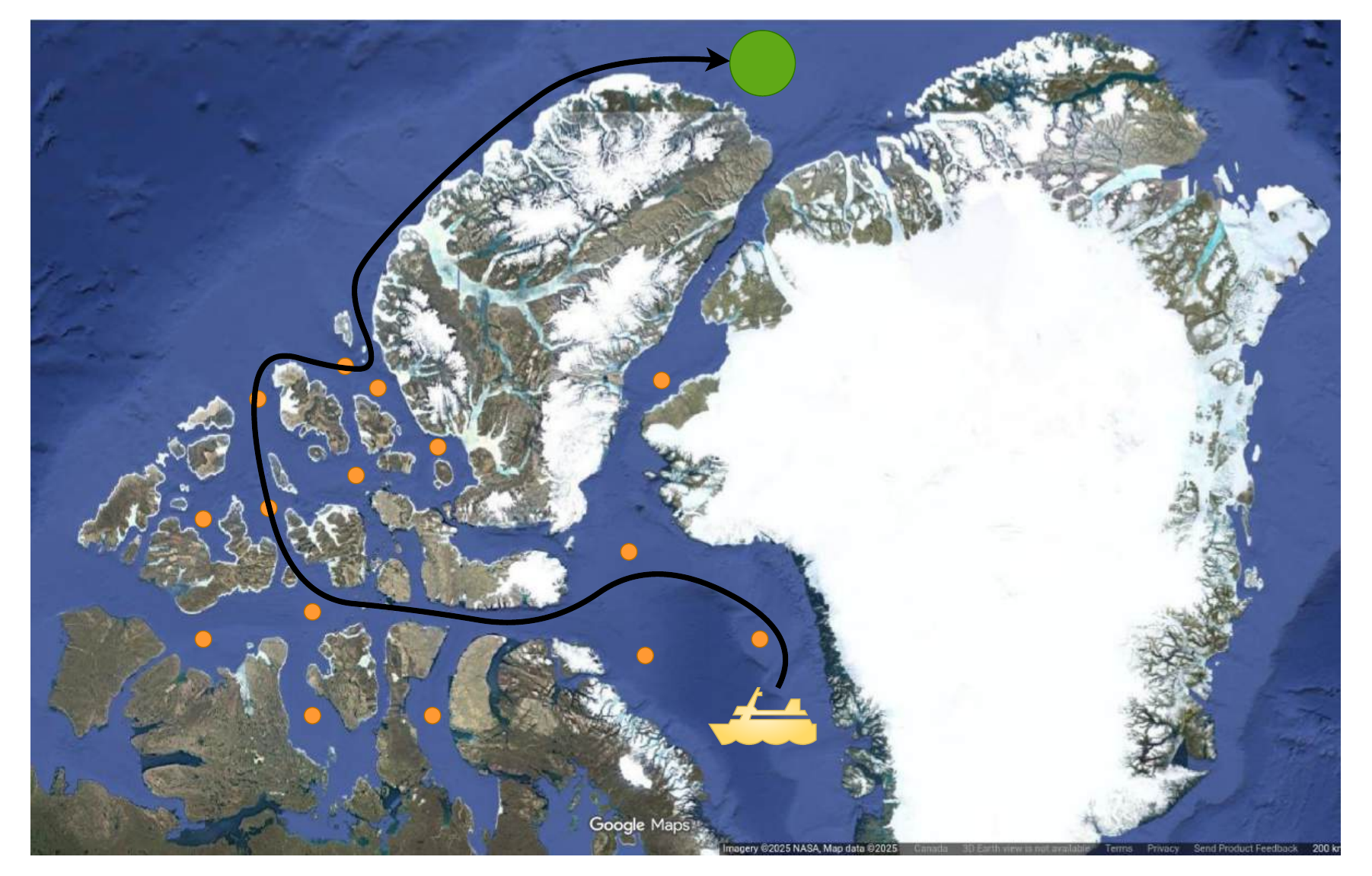} 
    \caption{Illustration of the \gls{ipp} setting: a ship (yellow) navigates in the Arctic to collect measurements at target locations (orange), while planning a trajectory (black) to reach the goal region (green) under a strict travel budget. }
    \label{fig:motivation_fig}
\end{figure}

The \gls{ipp} problem generalizes the Orienteering Problem~\cite{golden1987orienteering} and is therefore NP-hard. Graph-based methods solve IPP by discretizing the environment into a graph and selecting informative vertices under budget constraints~\cite{binney2012branch,dutta2022informative,dutta2025informative}.  These approaches  provide strong global structure but require the definition of a pre-selected measurement set that does not capture continuous sensing along trajectories. Continuous-space methods instead optimize parameterized trajectories~\cite{marchant2014bayesian,hitz2017adaptive,popovic2020informative}, allowing measurement locations to be chosen implicitly, but obstacle handling is often overlooked. Hierarchical approaches~\cite{meera2019obstacle} account for obstacles more explicitly, but typically rely on greedy solvers as the first step, which guarantee performance only in the case of submodular objectives. Since the objective we consider, the trace of the posterior covariance, is not submodular~\cite{jawaid2015submodularity}, these guarantees do not apply. 


\subsection{Contributions}
We propose a hierarchical IPP framework with three stages: 
(i) a graph-based planner that computes a global budget-feasible informative path 
(ii) segment-wise budget allocation guided by geometric reachability and kernel-influence bounds, and 
(iii) spline-based refinement with hard obstacle and length constraints while pruning irrelevant obstacles. 
We establish geometric and information bounds that justify the pruning rule and motivate a smooth max-coverage relaxation for budget allocation. Through experiments on synthetic cluttered environments and Arctic datasets, our approach consistently reduces posterior uncertainty compared to graph-only and continuous baselines, achieving up to $9{\times}$ speedup over gradient-based solvers and $20{\times}$ over black-box optimizers for the continuous space methods.

\subsection{Related Work}
\gls{ipp} lies at the intersection of sensing, planning, and optimization. It builds on sensor placement \cite{krause2008near}, which studies selecting informative measurements under resource constraints, and the orienteering problem \cite{golden1987orienteering}, which addresses reward-maximizing routing under a travel budget. Together, these foundations define \gls{ipp} as active information gathering through motion. Prior work could broadly be divided into graph-based formulations and continuous-space trajectory planners.

\subsubsection{Graph-based \gls{ipp}}
Global solvers achieve optimality but scale poorly: \cite{binney2012branch} uses branch-and-bound with variance monotonicity, while \cite{dutta2023unified,dutta2022informative,dutta2025informative} introduce \gls{miqp}-based formulations for sensing/scheduling and \gls{ipp}, with a Gaussian-process generalization that minimizes the trace of the posterior covariance. To address scalability, \cite{ott2024approximate} proposes sequential dynamic programming with convex-relaxation bounds for near-optimal solutions on larger graphs. Temporal-logic specifications have also been embedded into graph-based informative planning, enabling distributed multi-robot strategies \cite{leahy2015distributed}, and formal guarantees \cite{leahy2017informative}. While effective on graphs, these methods assume measurements at vertices and do not capture continuous sensing along paths.

\subsubsection{Continuous-space \gls{ipp}}
The limitations of discretization motivate continuous methods. Sampling-based planners include RIG \cite{hollinger2014sampling}, which extends RRT* \cite{karaman2011sampling} to submodular and time-varying objectives with branch-and-bound pruning. \cite{suh2016efficient} further biases sampling toward informative regions via the cross-entropy method, improving convergence efficiency. These approaches emphasize informative sampling but do not optimize a parameterized trajectory jointly with sensing locations.
Trajectory-parameterized methods optimize the path directly. \cite{marchant2014bayesian} employs spline trajectories with Bayesian optimization, evaluating informativeness by \emph{line integrals along the curve} (continuous sensing), but without an explicit path-length budget. \cite{hitz2017adaptive} applies \gls{cmaes} \cite{hansen2006cma} with budget-aware adaptive replanning, approximating information with finite samples along the path. Convergence can be improved via informed initialization \cite{popovic2020informative}. These approaches rely on black-box optimization and are often computationally expensive and sensitive to initialization. Obstacles are typically handled with soft penalties, which can result in infeasible solutions in cluttered domains~\cite{meera2019obstacle}. Our formulation enforces hard constraints. Recent work considers distributed navigation for estimating spatiotemporal fields \cite{nguyen2025spatially}. A related problem is ergodic exploration, which seeks trajectories whose time-averaged visitation matches a prescribed information density \cite{miller2013trajectory}; unlike \gls{ipp}, informativeness there is tied to coverage of a known density rather than uncertainty reduction in a latent field.

\section{Problem Formulation and Approach}
\label{Problem}
In this section, we introduce the problem formulation in \ref{formulation} and the outline of our hierarchical approach in \ref{approach outline}. 

\subsection{Problem Formulation}
\label{formulation}
Let \( \mathcal{M} \subset \mathbb{R}^d \) be a compact convex domain representing the environment, and let \( \mathcal{M}_{\text{obs}} \subset \mathcal{M} \) denote the region occupied by static obstacles. We denote the collision-free space by \( \mathcal{M}_{\text{free}} := \mathcal{M} \setminus \mathcal{M}_{\text{obs}} \). We focus on the setting where an autonomous robot navigates \( \mathcal{M}_{\text{free}} \) to gather information about an unknown scalar function \( f: \mathcal{M} \to \mathbb{R} \) modeled as a \gls{gp}~\cite{williams2006gaussian}

\[
f \sim \mathcal{GP}(0, k),
\]
with zero mean and a kernel \( k: \mathcal{M} \times \mathcal{M} \to \mathbb{R} \). We use the squared exponential (SE) kernel:
\begin{equation}
k(x, x') = \sigma^2 \exp\left( -\frac{1}{2\ell^2} \|x - x'\|^2 \right),
\end{equation}
where \( \sigma^2 \) is the signal variance, \( \ell \) is the length-scale parameter, and $x,x' \in \mathcal{M}$ are two input locations.
Let \( T = \{\tau_1, \dots, \tau_m\} \subset \mathcal{M} \) be a finite set of test locations at which we wish to minimize the posterior uncertainty about \( f \). The robot collects noisy measurements along its piecewise continuously differentiable trajectory \( \gamma: [0,1] \to \mathcal{M}_{\text{free}} \), with arc-length
\begin{equation}
    \operatorname{Length}(\gamma):=\int_0^1\|\gamma'(t)\| \, dt.
\end{equation}
Observations are sampled at a finite set of points along \( \gamma \), denoted by \( X(\gamma) = \{\gamma(t_1), \dots, \gamma(t_n)\} \subset \text{Im}(\gamma) \), where the time parameters $t_i$ are sampled uniformly across $[0, 1]$, modeling a constant frequency measurement process.  Each measurement is 
\begin{equation}
y_i = f(\gamma(t_i)) + \epsilon_i, \quad \epsilon_i \sim \mathcal{N}(0, \sigma^2_\epsilon).
\end{equation}

We assume the noise terms $\epsilon_i$ are independent and identically distributed (i.i.d.) Gaussian random variables with zero mean and variance $\sigma_\epsilon^2$.
The posterior covariance of the test set $T$ conditioned on the observation set $X(\gamma)$ is given by

\begin{equation}
\label{eq:conditional cov}
\bar{K}_{TT}^{X(\gamma)} = K_{TT} - K_{TX(\gamma)}(K_{X(\gamma)X(\gamma)} + \sigma^2_\epsilon I)^{-1} K_{X(\gamma)T},
\end{equation}

where 
\begin{equation}
\label{eq:kernel_matrices}
\begin{aligned}
K_{TT} &= \big( k(\tau_i,\tau_j) \big)_{i,j=1}^m \in \mathbb{R}^{m\times m}, \\
K_{TX(\gamma)} &= \big( k(\tau_i,\gamma(t_j)) \big)_{\substack{i=1,\dots,m \\ j=1,\dots,n}} \in \mathbb{R}^{m\times n}, \\
K_{X(\gamma)X(\gamma)} &= \big( k(\gamma(t_i),\gamma(t_j)) \big)_{i,j=1}^n \in \mathbb{R}^{n\times n}.
\end{aligned}
\end{equation}

Our goal is to minimize the estimation error over the test set, which is equivalent to the trace of posterior covariance. 
\begin{equation}
J(\gamma) := \operatorname{Tr}( \bar{K}_{TT}^{X(\gamma)}),
\end{equation}

The IPP problem is stated as follows: 

\begin{problem}[Continuous Space \gls{ipp}]
\label{prob:continuous-ipp}
Given a start point \( s \in \mathcal{M}_{\text{free}} \), goal point \( g \in \mathcal{M}_{\text{free}} \), trajectory length scalar budget \( B > 0 \), and test set \( T \subset \mathcal{M} \), find a trajectory \( \gamma: [0,1] \to \mathcal{M}_{\text{free}} \) that is a solution to

\begin{equation}
\label{eq:continuous-ipp}
\begin{aligned}
\min_{\gamma} \quad & J(\gamma) \\
\text{subject to} \quad & \gamma(0) = s, \quad \gamma(1) = g, \\
& \operatorname{Length}(\gamma) \leq B.
\end{aligned}
\end{equation}

\end{problem}

\begin{figure}
    \centering
    \includegraphics[width=0.6\linewidth]{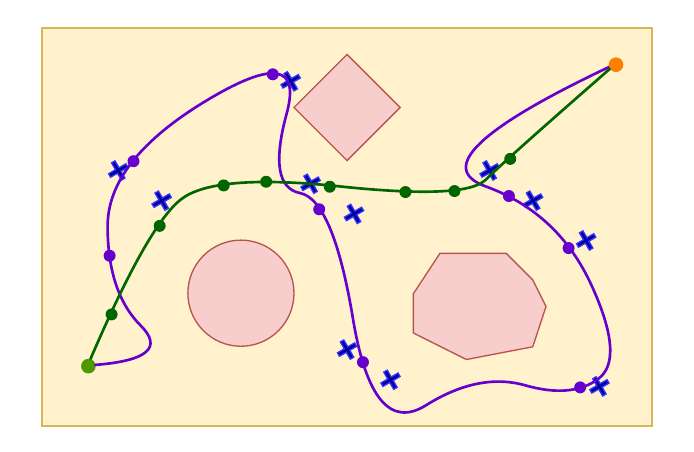}
\caption{
An instance of informative path planning in \( \mathcal{M}_{\text{free}} \). The robot moves from start \( s \) (green circle) to goal \( g \) (orange circle) along two trajectories \( \gamma \): a higher-budget (purple) and a shorter-budget (dark green) trajectory, each collecting \( n = 8 \) measurements (dots). Test points \( T \) (blue crosses) are locations of interest, and obstacles (light red) must be avoided. The longer path is able to take measurements closer to test points, while the shorter path is limited by its budget.
}
\label{fig:ipp_problem}     
\vspace{-10pt}
\end{figure}

Fig.~\ref{fig:ipp_problem} illustrates the problem presented in this section.

\subsection{Hierarchical Approach Outline}
\label{approach outline}
\label{Hierarchical}
\begin{figure*}
    \centering
    \includegraphics[width=0.9\linewidth]{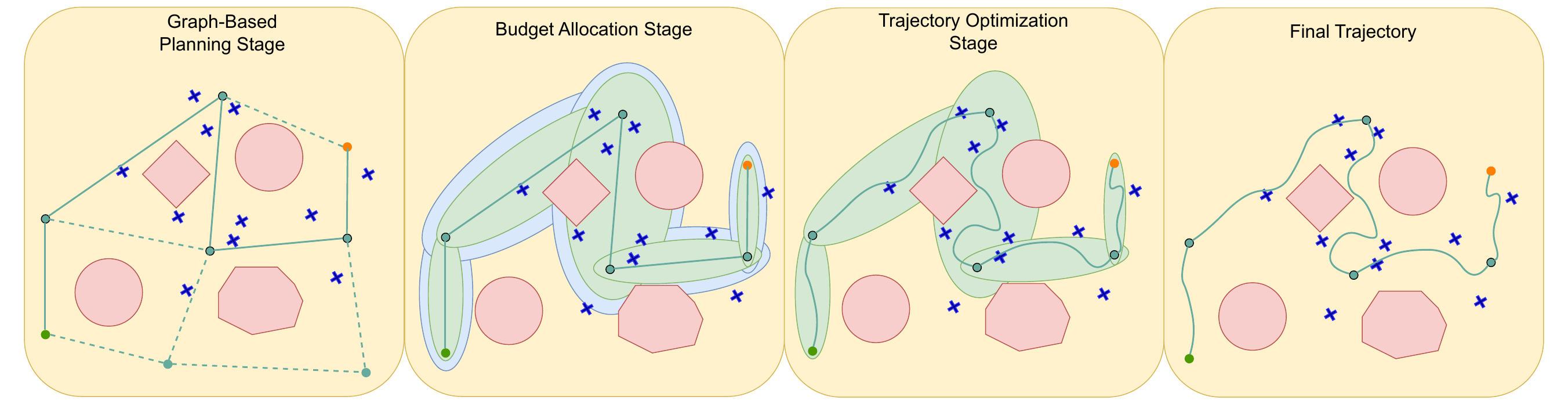}
    \caption{Illustration of the hierarchical IPP framework. 
A collision-free graph path provides global guidance. 
The path budget is then allocated across segments by maximizing coverage of test points using geometric and kernel-influence bounds. 
Finally, each segment is refined into a smooth spline trajectory that enforces obstacle and length constraints while pruning irrelevant obstacles, resulting in a continuous collision-free informative trajectory.}
    \label{fig:hierarchical-planner}
\end{figure*}

Solving Problem~\ref{prob:continuous-ipp} directly with trajectory optimization is computationally expensive and highly sensitive to initialization, since obstacle-avoidance constraints introduce nonconvex feasible regions. Our key insight is that this challenge can be mitigated by decomposing the problem into three stages (Fig.~\ref{fig:hierarchical-planner}): (i) graph-based planning for global guidance, (ii) segment-wise budget allocation that prioritizes informative segments, and (iii) spline-based trajectory optimization to refine each segment. This decomposition leverages the complementary strengths of discrete and continuous planning by first using a graph to provide global guidance, then refining individual edges through trajectory optimization, and ultimately improving computational tractability.

\subsubsection{Graph-based stage}
We first compute a budget-feasible path on a collision-free graph. This provides a fast and tractable way to capture global structure and avoid the complexity of continuous optimization and obstacle handling. The resulting path is globally optimal on the graph but assumes measurements are taken only at the vertices. In later stages, this graph path will be refined using trajectory optimization.

\subsubsection{Budget allocation stage}
A graph path specifies which edges to traverse, but not how much of the total budget should be spent refining each segment. As a key contribution, we propose a heuristic allocation method that distributes the global budget across edges to guide subsequent refinement.  We approximate the information available in the region surrounding an edge using geometric bounds and kernel-influence bounds to identify which test points could be affected by refining this segment.

\subsubsection{Trajectory refinement stage}
Finally, each segment is refined into a smooth trajectory using gradient-based spline optimization of the posterior covariance objective. The motivation for this stage is to capture the benefits of trajectory-based sensing, where information is gathered along the path rather than only at graph vertices, while reducing susceptibility to poor local minima by anchoring refinements to the graph structure. To improve tractability, we again exploit the geometric bounds results to prune obstacles that are not within a segment’s reach, thereby reducing constraint complexity.

\section{Graph-Based Planning Stage}
\label{sec:graph_planning}

We adopt the graph-based formulation of \cite{dutta2025informative}, where the graph 
\(G=(V,E)\) is given, either by discretizing the environment (e.g., grid or lattice), by 
sampling-based methods such as a probabilistic roadmap, or by manual construction. The 
vertices \(V\) are collision-free locations, and each edge \(e=(u,v)\in E\) is a feasible 
line segment with length 
\begin{equation}
\operatorname{Length}(e)=\|v-u\|.  
\end{equation}

A path \(\mathcal{P}=(v_0,\dots,v_k)\) from start \(s\) to goal \(g\) is feasible if 
\((v_{i-1},v_i)\in E\) for all \(i \in \{1, \ldots k\}\), with \(v_0=s\), \(v_k=g\), and total length within 
the budget:
\begin{equation}
    \sum_{e \in E_\mathcal{P}} \operatorname{Length}(e) \leq B,
\end{equation}
where \(E_\mathcal{P}\) is the set of edges in the path. We denote the set of all feasible 
paths by \(\mathcal{F}\). The optimal path is then
\begin{equation}
    \mathcal{P}^\star = \arg\min_{\mathcal{P} \in \mathcal{F}} 
    \operatorname{Tr}\!\big(\bar{K}_{TT}^{\mathcal{P}}\big),
\end{equation}
where \(\bar{K}_{TT}^{\mathcal{P}}\) is the posterior covariance from 
\eqref{eq:conditional cov}, but using the path vertices as measurement locations.
We rely on the convex reformulation of \cite{dutta2025informative}, which provides 
globally optimal solutions to this graph problem and can be solved efficiently using 
Gurobi~\cite{gurobi}. This stage yields a fast and tractable global plan, but with the 
simplifying assumption that measurements are only taken at vertices. In the next stages, we 
relax this assumption by allocating a budget to edges and refining them with continuous 
trajectories.

\section{Budget Allocation Stage}
In this section, we introduce a novel heuristic method for distributing the global refinement budget \(B\) across the segments of the graph-based path \(\mathcal{P}\). The key idea is to assign larger budgets to segments expected to reduce uncertainty more effectively. Since measurements are gathered along refined trajectories, we associate each segment with an informative region: the set of test points it can influence, given a certain budget. These regions are defined using geometric constructions in \ref{gemoetric bounds},  and kernel hyperparameters in \ref{informative bounds}. We then formulate an allocation problem that distributes \(B\) to maximize coverage of the resulting regions in \ref{max coverage}.

\subsection{Geometric Region Bounding}
\label{gemoetric bounds}
We first establish geometric bounds on the region reachable by any length-constrained curve between two endpoints. These results, based on arc length inequalities and elliptical containment, provide the foundation for approximating the informative region of each segment. We begin with a bound relating arc length to endpoint distances.

\begin{lemma}[Arc Length Bounds via Triangle Inequality]
\label{lemma:arc_length}
Let \( \gamma: [0, 1] \to \mathbb{R}^d \) be a piecewise continuously differentiable curve.
We define the arc length from \( a \) to \( b \) as
$
\operatorname{Length}(a \to b) := \int_a^b \|\gamma'(t)\| \, dt.
$
In particular, \( \operatorname{Length}(\gamma) := \operatorname{Length}(0 \to 1) \).

Then:
\begin{enumerate}
    \item \( \|\gamma(1) - \gamma(0)\| \leq \operatorname{Length}(\gamma) \)
    \item For any \( t \in [0, 1] \), we have
    $
    \|\gamma(t) - \gamma(0)\| + \|\gamma(1) - \gamma(t)\| \leq \operatorname{Length}(\gamma)
    $
\end{enumerate}
\end{lemma}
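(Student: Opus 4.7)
The plan is to prove both parts by reducing them to the standard fact that the norm of an integral is bounded by the integral of the norm, applied to the derivative $\gamma'$. For part (1), I would start from the fundamental theorem of calculus for vector-valued piecewise $C^1$ functions, writing
\[
\gamma(1)-\gamma(0)=\int_0^1 \gamma'(t)\,dt,
\]
and then take Euclidean norms on both sides. Applying the triangle inequality in its integral form yields
\[
\|\gamma(1)-\gamma(0)\| = \Bigl\|\int_0^1 \gamma'(t)\,dt\Bigr\| \leq \int_0^1 \|\gamma'(t)\|\,dt = \operatorname{Length}(\gamma).
\]
The piecewise nature of $\gamma'$ is harmless: split $[0,1]$ at the finitely many non-smooth points and apply the smooth version on each piece, then telescope.

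For part (2), I would invoke part (1) twice on two sub-curves. Fix $t\in[0,1]$ and consider the restrictions $\gamma|_{[0,t]}$ and $\gamma|_{[t,1]}$. The same FTC argument applied on each interval gives
\[
\|\gamma(t)-\gamma(0)\|\leq \operatorname{Length}(0\to t), \qquad \|\gamma(1)-\gamma(t)\|\leq \operatorname{Length}(t\to 1).
\]
Adding these and using additivity of arc length (which follows immediately from additivity of the integral defining $\operatorname{Length}$) gives
\[
\|\gamma(t)-\gamma(0)\|+\|\gamma(1)-\gamma(t)\|\leq \operatorname{Length}(0\to t)+\operatorname{Length}(t\to 1)=\operatorname{Length}(\gamma),
\]
completing the bound.

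There is no serious obstacle here; the result is a standard consequence of the integral triangle inequality, and the only mild bookkeeping issue is handling the finite set of points where $\gamma'$ may be discontinuous, which is dispatched by partitioning the parameter interval and summing. I would keep the write-up short, emphasizing part (1) as the core inequality and deriving part (2) as an immediate corollary by concatenation, since the subsequent sections use part (2) to justify elliptical containment of the informative region around each segment.
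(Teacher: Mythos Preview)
Your proposal is correct and follows essentially the same approach as the paper: part (1) via the fundamental theorem of calculus and the integral form of the triangle inequality, and part (2) by applying the same estimate on $[0,t]$ and $[t,1]$ and summing. The only difference is cosmetic---you explicitly note the piecewise handling and phrase part (2) as ``apply part (1) twice,'' whereas the paper just repeats the FTC argument directly on each subinterval.
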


\begin{proof}
1. By the Fundamental Theorem of Calculus, we have
\[
\gamma(1) - \gamma(0) = \int_0^1 \gamma'(t)\, dt.
\]
Taking the norm and applying the triangle inequality for integrals we get
\[
\|\gamma(1) - \gamma(0)\| \leq \int_0^1 \|\gamma'(t)\| \, dt = \operatorname{Length}(\gamma).
\]

2. For any \( t \in [0,1] \), we apply the same argument:
\[
\|\gamma(t) - \gamma(0)\| \leq \int_0^t \|\gamma'(s)\| \, ds = \operatorname{Length}(0 \to t),
\]
\[
\|\gamma(1) - \gamma(t)\| \leq \int_t^1 \|\gamma'(s)\| \, ds = \operatorname{Length}(t \to 1).
\]
Summing both we arrive at
\[
\|\gamma(t) - \gamma(0)\| + \|\gamma(1) - \gamma(t)\| \leq \operatorname{Length}(\gamma).
\]
\end{proof}

To formalize the region reachable by length-constrained curves, we define the elliptical region characterized by two points and a length bound (i.e., foci points and major axis length).
\begin{definition}[Elliptical Region]
Given two points \( u, v \in \mathbb{R}^d \) and a length bound \( L > 0 \), define the elliptical region
\[
\mathcal{E}(u, v, L) := \left\{ x \in \mathbb{R}^d \,\middle|\, \|x - u\| + \|x - v\| \leq L \right\}.
\]
This is the set of all points whose sum of distances to \( u \) and \( v \) is no greater than \( L \). In two dimensions, this corresponds to an ellipse with foci \( u \) and \( v \) and major axis length \( L \).
\end{definition}
\begin{figure}
    \centering
    \includegraphics[width=0.9\linewidth]{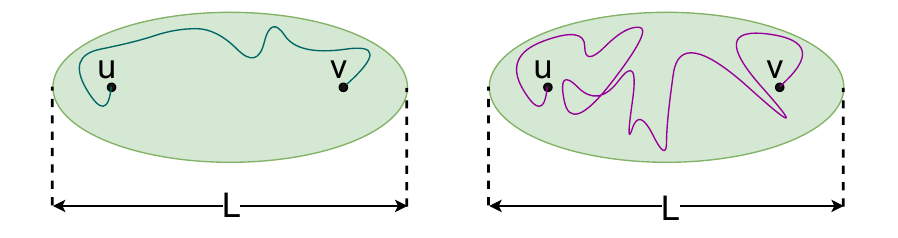}
   \caption{ \textit{Left:} Any trajectory starting at point $u$ and ending at $v$ with total length at most $L$ is fully contained within the ellipse $\mathcal{E}(u, v, L)$, defined by foci $u$, $v$ and major axis length $L$. \textit{Right:} The converse does not hold, a trajectory that remains entirely within the ellipse can still exceed length $L$ if it contains loops.}
    \label{fig:elliptical_containment}
\end{figure}
The following result implies that any length-constrained trajectory between two points lies entirely within an ellipse.
\begin{lemma}[Elliptical Containment of Length-Bounded Trajectories]
\label{lemma:containment}
Let \( u, v \in \mathbb{R}^d \), and let \( \gamma: [0,1] \to \mathbb{R}^d \) be a curve such that \( \gamma(0) = u \), \( \gamma(1) = v \), and \( \operatorname{Length}(\gamma) \leq L \). Then for all \( t \in [0,1] \), we have
$
\gamma(t) \in \mathcal{E}(u, v, L).
$
\end{lemma}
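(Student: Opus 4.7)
The plan is to observe that this lemma is essentially a direct consequence of part~2 of Lemma~\ref{lemma:arc_length} combined with the definition of the elliptical region $\mathcal{E}(u,v,L)$, so the proof will be short and primarily a matter of substitution and invoking the hypothesis $\operatorname{Length}(\gamma) \leq L$.

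First, I would fix an arbitrary $t \in [0,1]$ and apply Lemma~\ref{lemma:arc_length}~(2) to the curve $\gamma$, which yields
\[
\|\gamma(t) - \gamma(0)\| + \|\gamma(1) - \gamma(t)\| \leq \operatorname{Length}(\gamma).
\]
Next, I would use the boundary conditions $\gamma(0) = u$ and $\gamma(1) = v$ to rewrite the left-hand side as $\|\gamma(t) - u\| + \|\gamma(t) - v\|$, and chain the inequality with the hypothesis $\operatorname{Length}(\gamma) \leq L$ to obtain
\[
\|\gamma(t) - u\| + \|\gamma(t) - v\| \leq L.
\]
Finally, I would conclude by invoking the defining condition of the elliptical region $\mathcal{E}(u,v,L)$, which shows $\gamma(t) \in \mathcal{E}(u,v,L)$. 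Since $t \in [0,1]$ was arbitrary, the containment holds for all $t$.

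The main obstacle in this proof is essentially nonexistent: all of the real work, namely the triangle-inequality-for-integrals argument that bounds the sum of endpoint distances by the arc length, was already done in Lemma~\ref{lemma:arc_length}. The only subtlety worth flagging, consistent with the caveat illustrated in Fig.~\ref{fig:elliptical_containment} (right), is that the implication runs only one way: containment in the ellipse is a necessary but not sufficient condition for length-boundedness, so I would be careful not to overstate the conclusion or suggest an equivalence. Beyond that, the argument is a clean two-line deduction from the previous lemma.
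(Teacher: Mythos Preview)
Your proposal is correct and matches the paper's proof essentially line for line: the paper also invokes Lemma~\ref{lemma:arc_length} (part~2), substitutes $\gamma(0)=u$, $\gamma(1)=v$, chains with $\operatorname{Length}(\gamma)\le L$, and concludes via the definition of $\mathcal{E}(u,v,L)$. Your remark about the non-converse is likewise exactly what the paper records in the Remark immediately following the lemma.
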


\begin{proof}
From Lemma~\ref{lemma:arc_length}, for any \( t \in [0,1] \),
\[
\|\gamma(t) - u\| + \|\gamma(t) - v\| \leq \operatorname{Length}(\gamma) \leq L.
\]
Hence, \( \gamma(t) \in \mathcal{E}(u, v, L)\) for all \( t \in [0,1] \).
\end{proof}

\begin{remark}
The converse does not hold: a curve can lie entirely within \(\mathcal{E}(u, v, L)\) but still have \( \operatorname{Length}(\gamma) > L \), e.g., if it loops inside the ellipse as depicted in Fig.~\ref{fig:elliptical_containment}
\end{remark}

\begin{figure}
    \centering
    \includegraphics[width=0.9\linewidth]{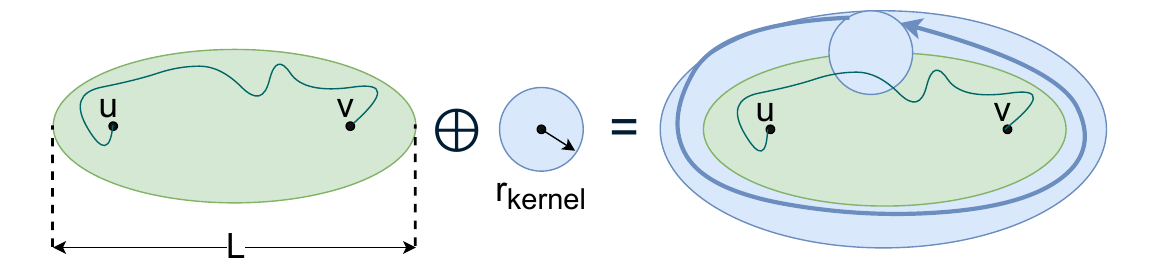}
    \caption{The Minkowski sum of the ellipse $\mathcal{E}(u, v, L)$ and a ball of radius $r_{\mathrm{kernel}}$. The resulting region represents all points within distance $r_{\mathrm{kernel}}$ from some point on the ellipse.}
    \label{fig:enter-label}
\end{figure}

\subsection{Informative Region Bounding}
\label{informative bounds}
Building on the geometric bounds, we incorporate kernel decay to characterize where measurements along a segment have non-negligible influence. This is expressed using Minkowski sums, which expand each elliptical region by the kernel’s effective radius.

\begin{definition}[Minkowski Sum]
Given two subsets \( A, B \subset \mathbb{R}^d \), the \emph{Minkowski sum} of \( A \) and \( B \) is defined as
\[
A \oplus B := \{ a + b \mid a \in A,\, b \in B \}.
\]
\end{definition}

We now use this operation to express a bound on the region where measurements have a non-negligible impact on test points. 

\begin{lemma}[Kernel Influence Radius via Minkowski Sum]
\label{lemma:kernel_influence}
Let \( k(x, x') = \sigma^2 \exp\left( - \frac{\|x - x'\|^2}{2\ell^2} \right) \) be the squared exponential kernel. Fix a tolerance \( \epsilon \in (0, \sigma^2) \), and define the kernel influence radius:
\[
r_{\text{kernel}} := \ell \sqrt{2 \log\left( \frac{\sigma^2}{\epsilon} \right)}.
\]
Let \( \gamma \) be a curve satisfying \( \operatorname{Length}(\gamma) \leq L \), and let \( \mathcal{E}(u, v, L) \) be the ellipse with foci \( u = \gamma(0) \), \( v = \gamma(1) \), and major axis length \( L \). Then, for any point \( x \notin \mathcal{E}(u, v, L) \oplus B(0, r_{\text{kernel}}) \), we have:
\[
k(x, \gamma(t)) < \epsilon \quad \text{for all } t \in [0,1].
\]
\end{lemma}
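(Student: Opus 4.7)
The plan is to chain three elementary facts: (i) Lemma~\ref{lemma:containment} confines every admissible $\gamma$ inside the ellipse $\mathcal{E}(u,v,L)$; (ii) being outside the Minkowski thickening $\mathcal{E}(u,v,L) \oplus B(0, r_{\text{kernel}})$ separates $x$ from every ellipse point by more than $r_{\text{kernel}}$; and (iii) the squared exponential kernel is strictly decreasing in distance, so this separation, combined with the chosen value of $r_{\text{kernel}}$, forces $k(x, \gamma(t)) < \epsilon$.

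First I would invoke Lemma~\ref{lemma:containment} directly; its hypotheses (matching endpoints $u, v$ and the length bound $L$) are exactly what is assumed here, so $\gamma(t) \in \mathcal{E}(u,v,L)$ for every $t \in [0,1]$. Second, I would argue the Minkowski step by contrapositive: if some $y \in \mathcal{E}(u,v,L)$ satisfied $\|x - y\| \leq r_{\text{kernel}}$, then $b := x - y$ would lie in $B(0, r_{\text{kernel}})$ and the decomposition $x = y + b$ would place $x$ inside $\mathcal{E}(u,v,L) \oplus B(0, r_{\text{kernel}})$, contradicting the hypothesis. Hence $\|x - y\| > r_{\text{kernel}}$ for every $y$ in the ellipse, and specializing to $y = \gamma(t)$ yields $\|x - \gamma(t)\| > r_{\text{kernel}}$ for all $t$.

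Third, since $k(x, x')$ is strictly monotonically decreasing in $\|x - x'\|$, the distance bound transfers to the kernel value:
\[
k(x, \gamma(t)) < \sigma^2 \exp\!\left(-\frac{r_{\text{kernel}}^2}{2\ell^2}\right),
\]
and substituting $r_{\text{kernel}}^2 = 2\ell^2 \log(\sigma^2/\epsilon)$ collapses the right-hand side to $\sigma^2 \cdot (\epsilon/\sigma^2) = \epsilon$, giving the claim.

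The main obstacle is really just a bookkeeping point rather than a mathematical one: aligning the strict-versus-non-strict inequality with the open-ball versus closed-ball convention for $B(0, r_{\text{kernel}})$. With the standard closed-ball reading, ``$x$ outside the Minkowski sum'' cleanly yields the strict inequality $\|x - \gamma(t)\| > r_{\text{kernel}}$, which then propagates (via the strict monotonicity of $\exp$) to the strict bound $k(x, \gamma(t)) < \epsilon$ in the conclusion. Apart from this convention check, the argument is a one-line application of Lemma~\ref{lemma:containment} plus a single algebraic simplification.
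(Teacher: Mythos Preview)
Your argument is correct and matches the paper's proof essentially step for step: invoke Lemma~\ref{lemma:containment} to confine $\gamma$ to the ellipse, use the Minkowski-sum hypothesis to get $\|x-\gamma(t)\|>r_{\text{kernel}}$, and then plug into the kernel and simplify. The only addition you make is spelling out the contrapositive for the Minkowski step and flagging the open/closed-ball convention, which the paper leaves implicit.
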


\begin{proof}
From Lemma 2, we know \( \gamma(t) \in \mathcal{E}(u, v, L) \) for all \( t \in [0,1] \). If \( x \notin \mathcal{E}(u, v, L) \oplus B(0, r_{\text{kernel}}) \), then
\[
\|x - \gamma(t)\| > r_{\text{kernel}} \quad \text{for all $t$}.
\]
Then, it follows that
\[
k(x, \gamma(t)) = \sigma^2 \exp\left( -\frac{\|x - \gamma(t)\|^2}{2\ell^2} \right) < \sigma^2 \exp\left( -\frac{r_{\text{kernel}}^2}{2\ell^2} \right).
\]
By definition of \( r_{\text{kernel}} \), we have
\[
\exp\left( -\frac{r_{\text{kernel}}^2}{2\ell^2} \right) = \frac{\epsilon}{\sigma^2},
\]
and so
\[
k(x, \gamma(t)) < \sigma^2 \cdot \frac{\epsilon}{\sigma^2} = \epsilon.
\]
\end{proof}

\begin{remark}[Negligible Effect of Distant Test Points]
Lemma~\ref{lemma:kernel_influence} shows that any test point $\tau \in T$ outside $\mathcal{E}(u,v,L)\oplus B(0,r_{\text{kernel}})$ has kernel value $<\epsilon$ with any measurement sampled on the trajectory and thus negligible effect on the \gls{gp} posterior. Budget allocation should therefore maximize the number of test points covered by the union of these regions.
\end{remark}

\subsection{Max Coverage Problem for Local Budget Allocation}
\label{max coverage}
Using Lemmas~\ref{lemma:containment} and~\ref{lemma:kernel_influence}, a segment $e=(u,v)$ with allocated length $L_e$ can only influence test points within the region $\mathcal{E}(u,v,L_e)\oplus B(0,r_{\text{kernel}})$. 
Hence, budget allocation can be cast as a maximum coverage problem, in which we distribute the total budget across edges so as to maximize the number of covered test points. 
For tractability, we approximate this influence region with the conservative ellipse 
\begin{equation}
\mathcal{R}_e=\{x \mid \|x-u\|+\|x-v\|\leq L_e+2r_{\text{kernel}}\}.  
\end{equation}
Coverage of a test point $\tau_j$ by edge $e$ is approximated with the smooth indicator  
\begin{equation}
s_{ej}(L_e)=1/\bigl(1+\exp(\alpha(d_{ej}-L_e-2r_{\text{kernel}}))\bigr),
\end{equation}  
where $d_{ej}=\|\tau_j-u\|+\|\tau_j-v\|$ and $\alpha>0$ controls sharpness. A point is considered covered if at least one edge covers it, expressed as follows 
\begin{equation}
z_j=1-\prod_{e\in E_\mathcal{P}}(1-s_{ej}(L_e)).
\end{equation}

The budget allocation problem is then formulated as the following \gls{nlp}
\begin{align}
\label{eq:alloc-obj}
\underset{\{L_e\}}{\text{maximize}} \quad & \sum_{j=1}^m  z_j \\
\label{eq:edge-lower}
\text{subject to} \quad & L_e \ge \|u-v\|, \quad \forall e \in E_\mathcal{P}, \\
\label{eq:budget-total}
& \sum_{e \in E_\mathcal{P}} L_e \le B.
\end{align}
Constraint~\eqref{eq:edge-lower} enforces that the allocation on each edge is at least its 
geometric length (i.e., the minimum required to traverse the edge), while 
Constraint~\eqref{eq:budget-total} bounds the total distributed budget by the global budget 
\(B\). We solve this \gls{nlp} in CasADi~\cite{Andersson2018} with IPOPT~\cite{wachter2006implementation}, and use the optimal $\{L_e\}$ as local budgets for trajectory refinement.

\begin{remark}
Since coverage increases with $L_e$, allocating more budget to a segment can 
only expand the set of test points it influences. The allocation problem thus balances how much additional trajectory length to assign to each segment. These budgets then guide the refinement stage, where trajectories are optimized locally within their 
assigned budget allocations.
\end{remark}

\section{Trajectory Optimization Stage}
\label{sec:trajectory-optimization}
Given the ordered edges $E_\mathcal{P}$ of the graph path $\mathcal{P}$ and their allocated budgets $\{L_e\}_{e \in E_\mathcal{P}}$, we refine each edge $e=(u,v)$ into a smooth trajectory segment $\gamma_e:[0,1]\to\mathcal{M}_{\text{free}}$. Each segment is optimized independently to reduce GP posterior uncertainty over the test set $T$, while remaining within its local budget $L_e$ and inside the collision-free workspace.

\subsection{Curve Representation and Objective}
Each segment is parameterized as a degree-$d$ B-spline
\begin{equation}
\gamma_e(t) = \sum_{i=0}^r \Psi_i(t) P_i^e, \quad t\in[0,1],
\end{equation}
where $P^e=\{P_0^e,\dots,P_r^e\}$ are the control points and $\Psi_i(t)$ are the B-spline 
basis functions defined recursively by the Cox–de Boor formula from the chosen knot vector.  
We use clamped, uniform knots, which guarantee endpoint interpolation so that 
$\gamma_e(0)=P_0^e$ and $\gamma_e(1)=P_r^e$. To enforce $C^0$ continuity with the graph 
path, we fix
\begin{equation}
P_0^e = u, \qquad P_r^e = v,
\end{equation}
and optimize only over the interior control points $\{P_1^e,\dots,P_{r-1}^e\}$. These interior 
points are the decision variables of the refinement problem. Although we use  B-splines for their local control properties, the formulation is compatible with other curve bases such as Bézier. 

To evaluate our objective, we sample the curve at $N$ points $\{t_i\}_{i=1}^N$ to get measurement points $X_e = \{\gamma_e(t_i)\}_{i=1}^N$. The objective is the posterior covariance matrix conditioned on $X_e$:
\begin{equation}
\label{eq:traj-obj}
J_e(P^e) = \operatorname{Tr}\!\big(\bar{K}_{TT}^{X_e}\big),
\end{equation}
where $\bar{K}_{TT}^{X_e}$ is defined in \eqref{eq:conditional cov}. Minimizing $J_e(P^e)$  encourages trajectories that place samples in informative regions and reduce posterior uncertainty over $T$.

\subsection{Nonlinear Program}
The trajectory refinement problem for a segment $e$ is 
\begin{align}
\label{eq:nlp-obj}
\min_{P^e} \quad & J_e(P^e) \\[4pt]
\text{s.t.} \quad 
\label{eq:nlp-budget}
& \operatorname{Length}(\gamma_e) \leq L_e, \\[4pt]
\label{eq:nlp-env}
& \phi_{\text{env}}(\gamma_e(t_j^{\text{env}})) \ge 0, \quad j=1,\dots,N_{\text{env}}, \\[4pt]
\label{eq:nlp-obs}
& \phi_{\text{obs}}(\gamma_e(t_j^{\text{obs}})) \ge 0, \quad j=1,\dots,N_{\text{obs}}.
\end{align}

Constraint~\eqref{eq:nlp-budget} ensures that the refined trajectory for edge $e$ respects its allocated budget $L_e$. Constraint~\eqref{eq:nlp-env} enforces that the trajectory remains inside the workspace $\mathcal{M}$ by requiring positive signed distances to the environment boundary. Constraint~\eqref{eq:nlp-obs} ensures obstacle avoidance by requiring 
positive signed distances to obstacles.  
This nonlinear program is solved in CasADi~\cite{Andersson2018} with IPOPT~\cite{wachter2006implementation}.

\begin{remark}[Obstacle Pruning for Efficiency]
Obstacles disjoint from the ellipse $\mathcal{E}(u,v,L_e)$ can be ignored, since Lemma~\ref{lemma:containment} guarantees the refined trajectory remains inside this region. Pruning them reduces constraint evaluations, improves solver efficiency, and does not affect feasibility.
\end{remark}

\subsection{Relying Only on Trajectory Optimization}

Fig.~\ref{fig:complex-obstacle} shows examples of optimized B-spline trajectories in an obstacle-cluttered environment. In the left panel, the trajectory successfully navigates around obstacles while taking measurements close to test points. However, because the underlying problem is an \gls{nlp}, the solution may get trapped in a poor local minima. This is illustrated in the right panel, where a slight change in the start location results in a poor trajectory. This motivates the need to use our hierarchical framework instead of solely relying on continuous optimization methods.

\begin{figure}[h]
    \centering
    \includegraphics[width=0.5\linewidth]{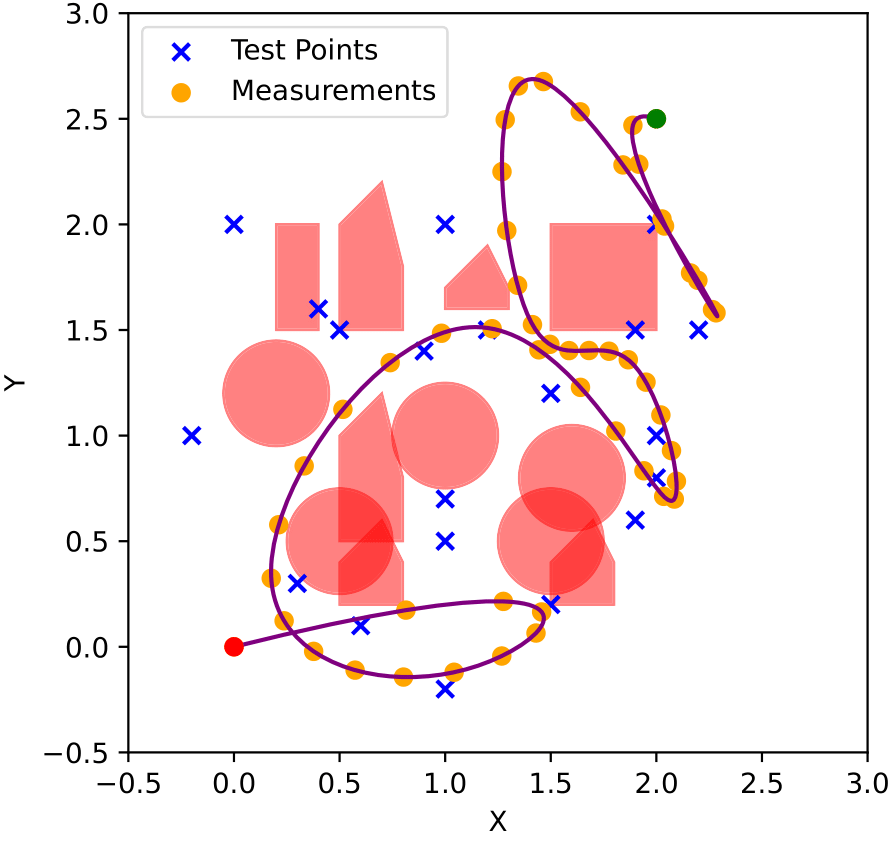}\hfill
    \includegraphics[width=0.5\linewidth]{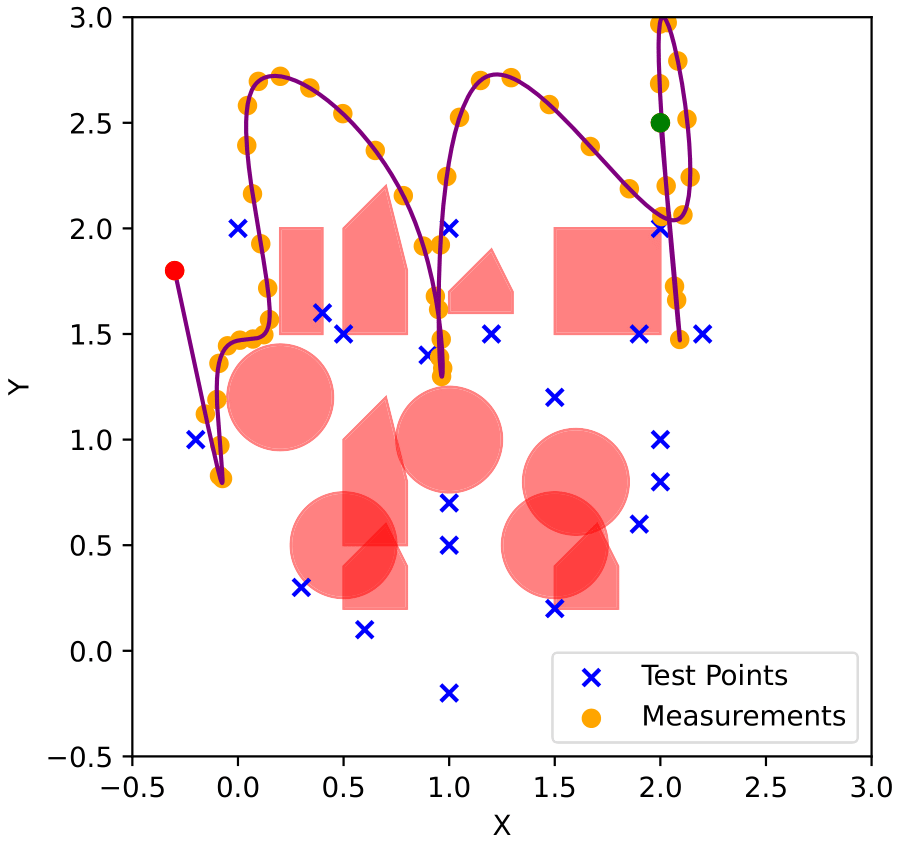}
    \caption{Examples of B-spline trajectory optimization in an obstacle-rich environment with 12 control points. 
    \textit{Left:} a successful trajectory that avoids obstacles and remains close to informative test points. 
    \textit{Right:} a suboptimal trajectory where the optimizer converges to a poor local minimum, wasting budget in low-information areas.}
    \label{fig:complex-obstacle}
    \vspace{-10pt}
\end{figure}


\section{Simulation Results}
\label{results}
We evaluate our approach in both synthetic and geographic map settings using the ice-cover Arctic dataset and benchmark it against continuous-space and graph-based baselines. 

\subsection{Comparisons in obstacle-cluttered environments}
In this section, we evaluate our hierarchical framework against a graph-based \gls{micp} planner~\cite{dutta2025informative}, a continuous-space solver using the non-adaptive version of CMA-ES CIPP~\cite{hitz2017adaptive}, and a CasADi-based IPOPT solver. Both CMA-ES CIPP and CasADi optimize cubic B-splines with 14 control points, while the hierarchical planner uses shorter B-spline segments with 5 points, requiring less flexibility since the global structure is fixed by the graph. 
\paragraph{Setup}
The environment (Fig.~\ref{fig:complex-obstacle}) spans $3.5 \times 3.5$ meters and contains twelve polygonal and circular obstacles. A fixed graph with 11 vertices and 22 edges is manually constructed to cover the free space. Start and goal vertices are sampled randomly with a minimum separation of $1.0$~m to avoid trivial instances and ensure trajectories must traverse obstacles. We adopt a squared exponential kernel with parameters $\ell = 0.35$~m, $\sigma^2 = 10$, and $\sigma_\epsilon^2 = 0.1$, together with a budget $B = 14$ and $m = 35$ random test points. For this kernel, correlations decay to near zero at a distance of about $3\ell \approx 1$~m. Consequently, informative trajectories are likely to navigate between obstacle regions to capture informative measurements. By contrast, if the field were highly correlated, almost any feasible trajectory would yield similar information regardless of its path, and if the field were much less correlated, single measurements would provide little information about nearby test points, leading to a useless setup. For fairness, all planners collect the same number of measurements. The hierarchical and graph planners distribute them uniformly along their selected segments, while continuous solvers place them along their optimized trajectories. 

\paragraph{Results}
The hierarchical planner achieves the lowest mean objective (17.8) with reduced variance across runs, compared to CMA-ES CIPP (25.1), CasADi-IPOPT (25.8), and the graph baseline (27.4) (Fig.~\ref{fig:complex-eval-boxplot}). This represents a $\sim29\%$ improvement over direct trajectory optimization and a $\sim35\%$ improvement over the graph-only baseline. In runtime, the graph planner remains fastest (0.27s), but the hierarchical method (10.2s) is nearly an order of magnitude faster than CasADi-IPOPT (92.0s) and twenty times faster than CMA-ES CIPP (204.2s). By restricting optimization to elliptical regions informed by the graph, the hierarchical approach reduces obstacle constraints and converges more efficiently, balancing between the speed of discrete planning and the performance of continuous refinement.

\begin{figure}[t]
    \centering

    \subfloat[Objective values across solvers.]{
        \includegraphics[width=0.65\linewidth]{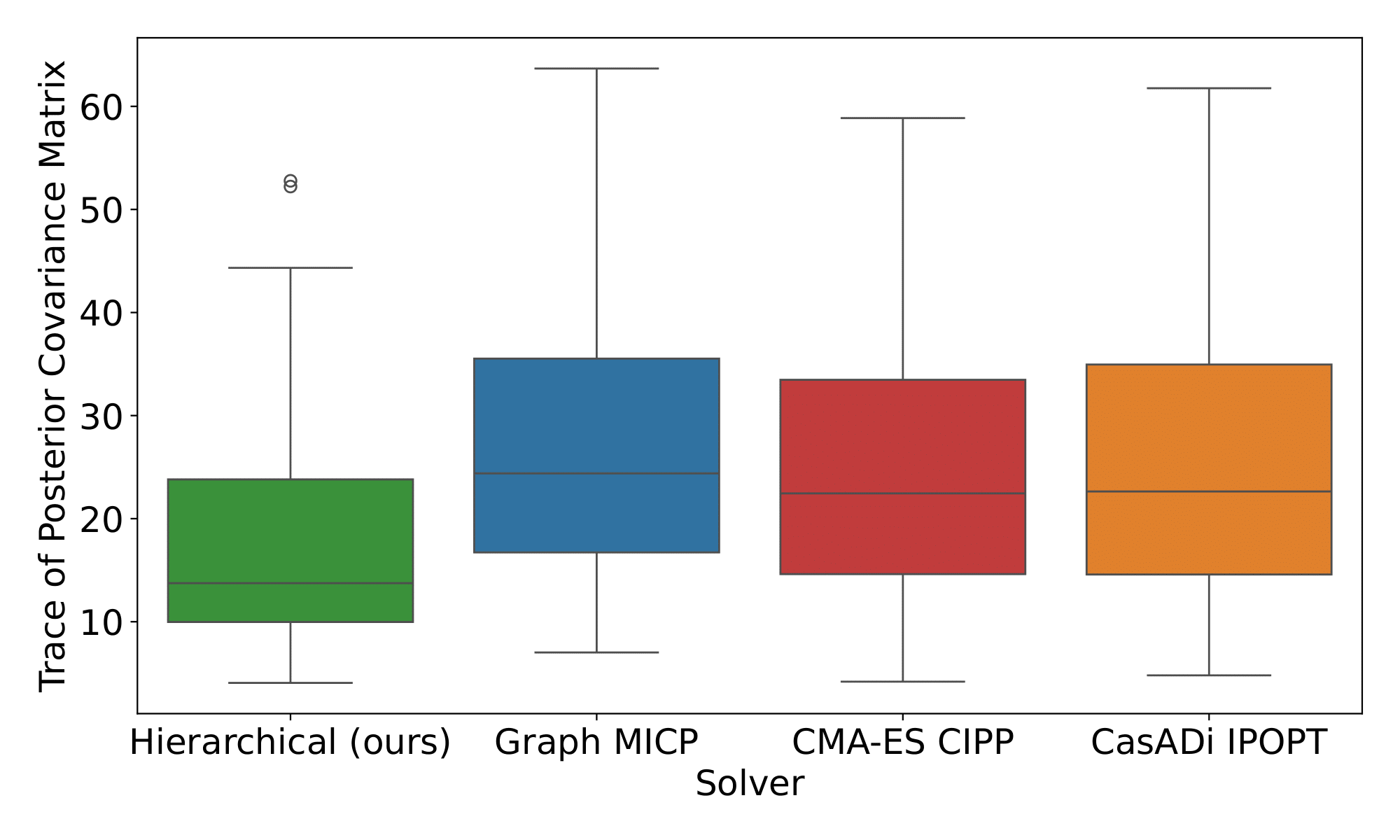}
    }\hfill
    \subfloat[Runtime across solvers.]{
        \includegraphics[width=0.65\linewidth]{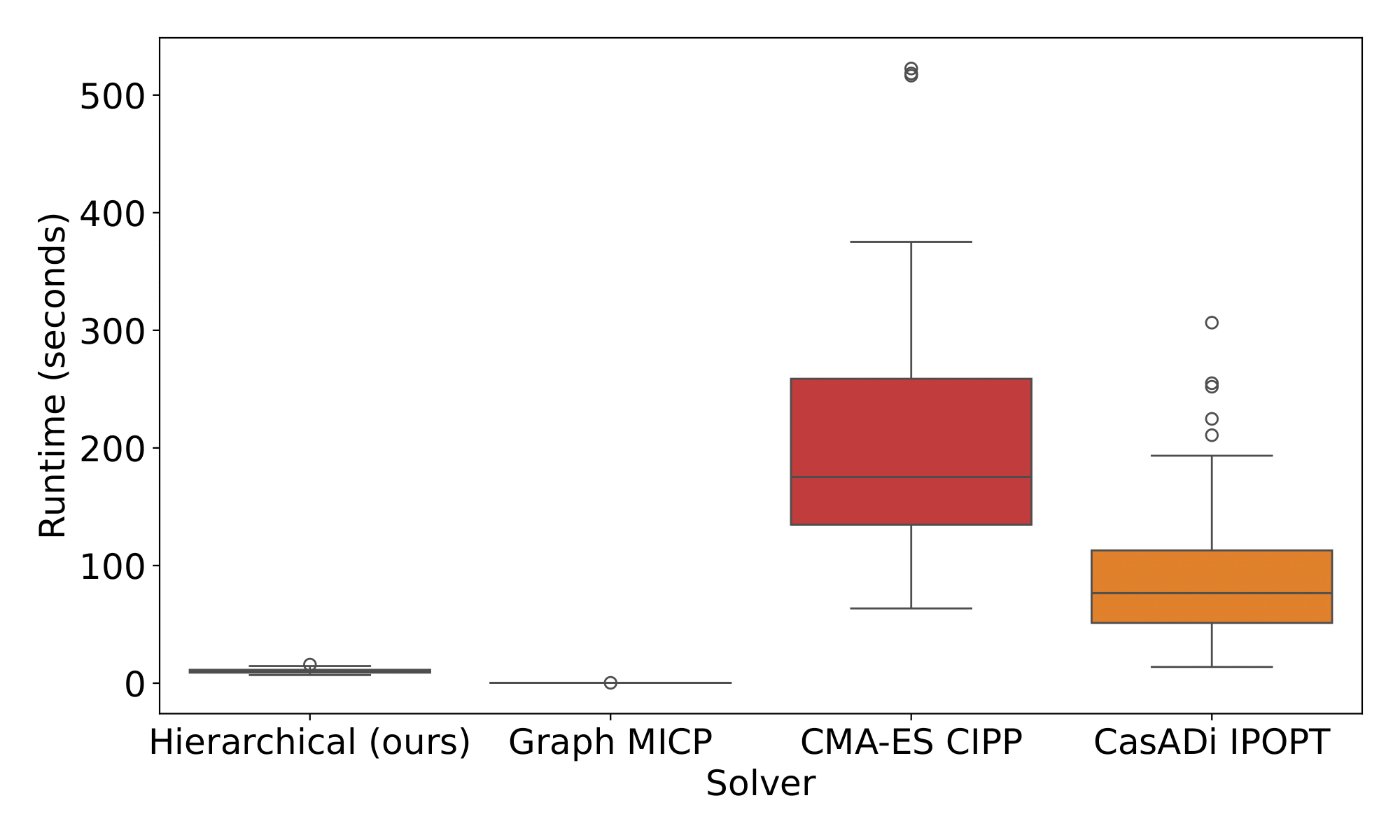}
    }

    \caption{Performance comparison in obstacle-cluttered environments. Hierarchical planner attains the lowest objective (30–35\%) better than baselines, and runs $9\times$ faster than CasADi-IPOPT and $20\times$ faster than CMA-ES.
}
    \label{fig:complex-eval-boxplot}
    \vspace{-10pt}
\end{figure}

\subsection{Arctic Dataset}
\paragraph{Setup 2} We evaluate our method on sea-ice cover in the Canadian Arctic Archipelago, using ERA5 daily data (April 2025) from the Copernicus Climate Change Service~\cite{era5_daily_single} with land boundaries from Cartopy. For numerical stability and interpretable kernel hyperparameters, the domain is uniformly scaled by a factor of $10^5$, so that one unit in the plots corresponds to 100 km in the real world while preserving the geometry of the region. Non-convex landmasses are decomposed into 90 convex obstacles. The planning task models an autonomous surface vessel navigating between start and goal locations to collect informative sea-ice measurements along its trajectory.  Kernel hyperparameters are obtained by maximizing the marginal likelihood, yielding a squared exponential kernel with lengthscale $\ell \approx 1.8$ (about 180km in real geography), output scale $\sigma^2 \approx 0.1$, and noise variance $\sigma_\epsilon^2 \approx 10^{-2}$. Since sea-ice cover lies in $[0,1]$.
In this scenario, the user provides an importance field \( I \) as shown in the top-left panel of Fig.~\ref{fig:Archipelago}, which assigns higher values to regions of interest. This field is normalized into a probability density, from which \( m=30 \) test points are sampled, constrained to lie in free space. Due to the large number of partitioned obstacles and the cluttered environment, pure trajectory optimization methods failed. CasADi-IPOPT terminated before finding a feasible solution, and \gls{cmaes} returned a trajectory that collided with obstacles, with both being given 1000 iterations.
In contrast, our hierarchical optimization framework successfully generates feasible start–goal paths that satisfy both budget and obstacle constraints while collecting informative measurements along the way. To assess estimation quality, we compare two cases: a lower budget of 40 units (4,000 km in real geography) and a higher budget of 65 units (6,500 km). Performance is measured using the importance-weighted RMSE, which accounts for the user-defined importance field.

\paragraph{Importance-weighted RMSE}  
To evaluate approximation quality, we compute the root mean squared error weighted by a normalized importance distribution \( \tilde{I}(x) \), where \( \tilde{I}(x) = I(x) / \sum_j I(x_j) \). Letting \( \mu(x) \) denote the posterior mean of the \gls{gp} and \( f(x) \) the ground truth, the metric is approximated over a grid \( \{x_j\}_{j=1}^n \) as:
\begin{equation}
\text{Weighted RMSE} \approx \sqrt{ \sum_{j=1}^{n} \tilde{I}(x_j) \left( \mu(x_j) - f(x_j) \right)^2 }.
\label{eq:importance_rmse_discrete}
\end{equation}

\paragraph{Results}  
Table~\ref{tab:importance_metrics} summarizes the estimation performance across different path budgets. As expected, the higher-budget trajectory attains a lower importance-weighted RMSE than the lower-budget one, reflecting its ability to place measurements more effectively in regions prioritized by the user. Fig.~\ref{fig:Archipelago} illustrates the sea-ice data and the corresponding trajectories generated by the hierarchical planner, while Fig.~\ref{fig:error_maps} shows the spatial distribution of the weighted mean squared error across the map.

\begin{table}[t]
    \centering
    \caption{Estimation performance under different path budgets.}
    \label{tab:importance_metrics}

    \setlength{\tabcolsep}{4pt}
    \renewcommand{\arraystretch}{1.15}

    \begin{tabular*}{\linewidth}{@{\extracolsep{\fill}}lccc@{}}
        \hline
        \textbf{Method} &
        \shortstack{\textbf{RMSE}\\\textbf{(at test pts)}} &
        \shortstack{\textbf{Weighted}\\\textbf{RMSE}\textbf{(in the whole map)}}  \\
        \hline
        Hierarchical planner (low budget) & 0.314 & 0.521\\
        Hierarchical planner (high budget)& 0.156 & 0.381 \\
        \hline
    \end{tabular*}
\end{table}

\begin{figure}[t]
    \centering

    \subfloat[Importance field $I$]{%
        \includegraphics[width=0.48\linewidth]{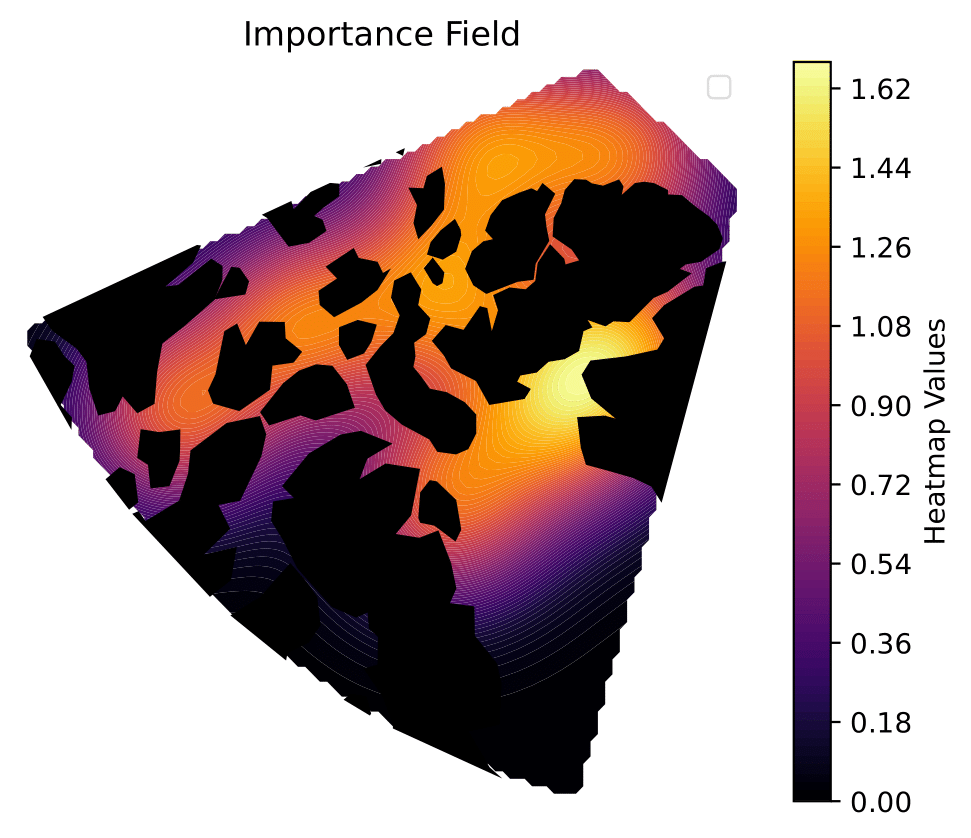}%
    }\hfil
    \subfloat[Real ice cover]{%
        \includegraphics[width=0.48\linewidth]{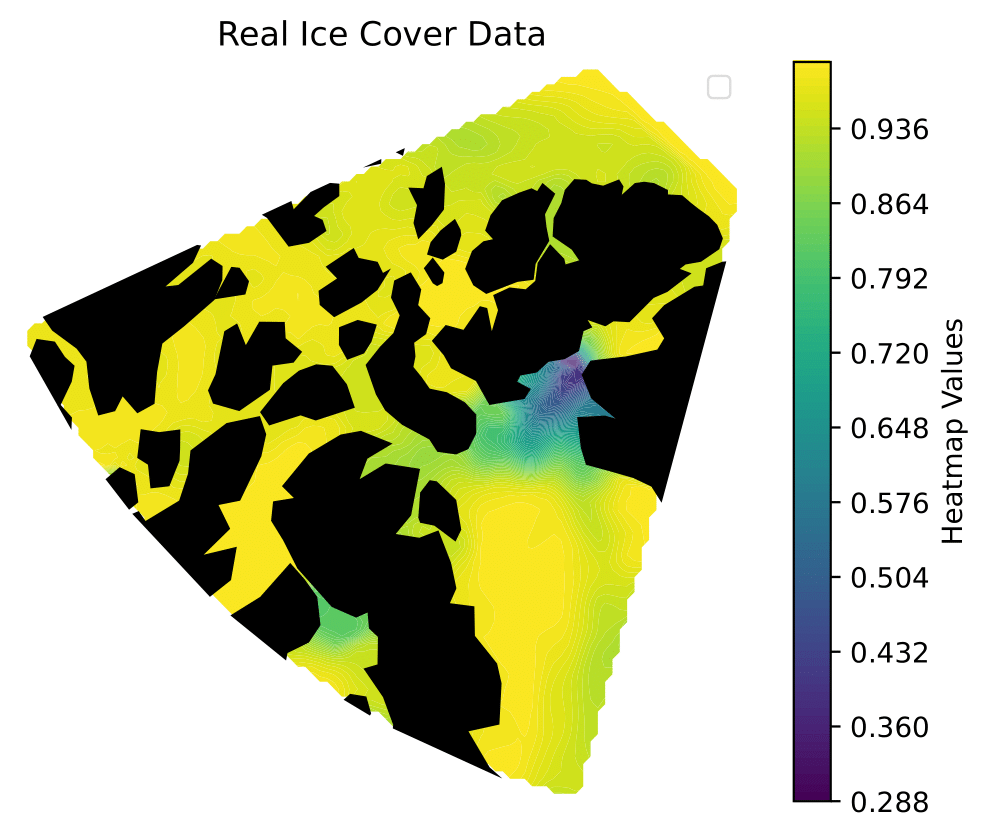}%
    }

    \subfloat[Low-budget plan]{%
        \includegraphics[width=0.48\linewidth]{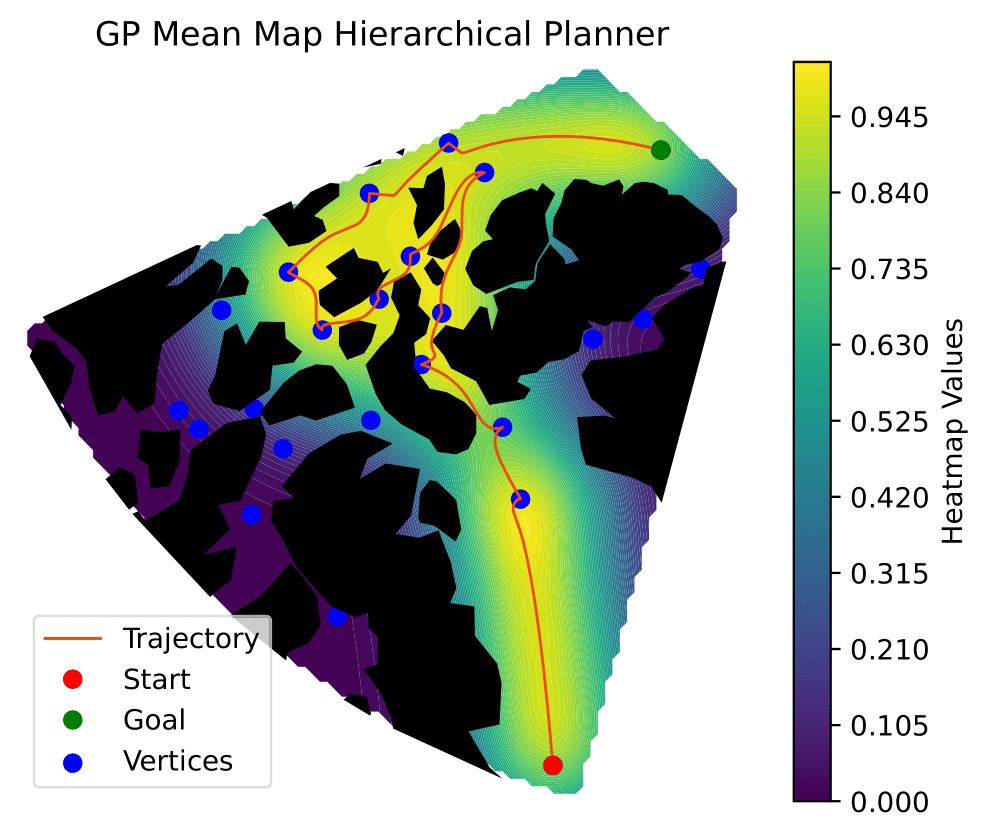}
    }\hfil
    \subfloat[High-budget plan]{%
        \includegraphics[width=0.48\linewidth]{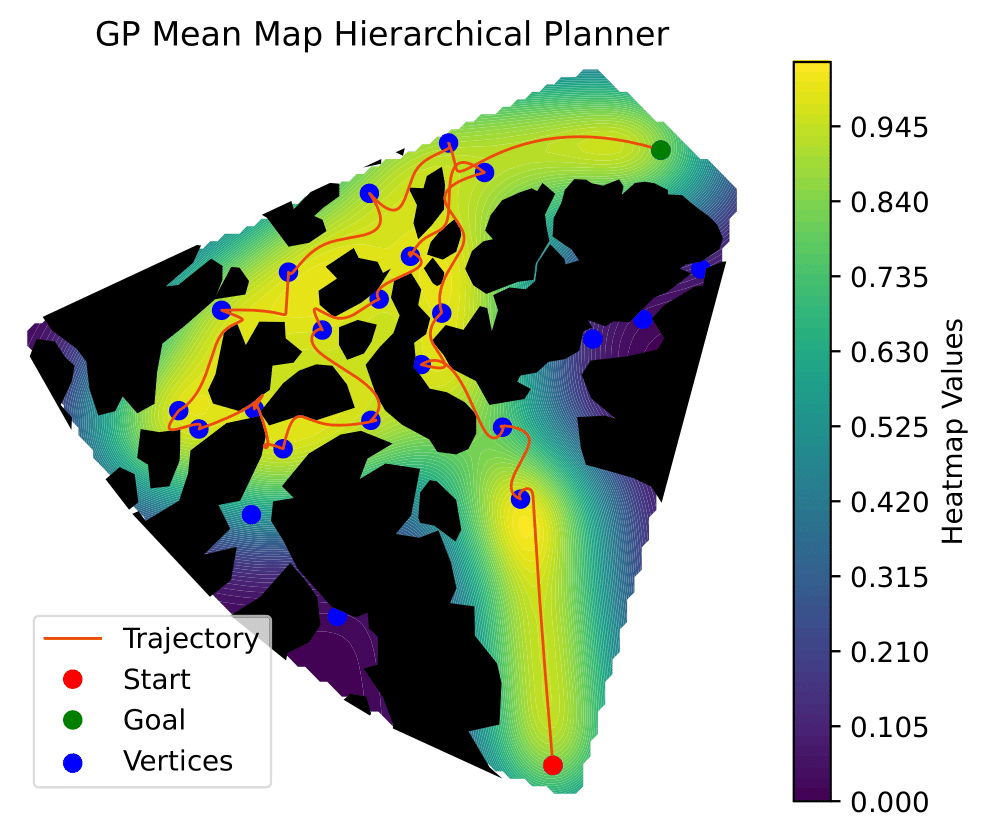}%
    }

    \caption{Hierarchical IPP in the Canadian Arctic Archipelago. Satellite data from \cite{era5_daily_single}. Plans are overlaid on the GP posterior mean; obstacles in black.}
    \label{fig:Archipelago}
    \vspace{-10pt}
\end{figure}

\begin{figure}[t]
    \centering

    \subfloat[Mean squared error map for a low-budget trajectory.]{
        \includegraphics[width=0.45\linewidth]{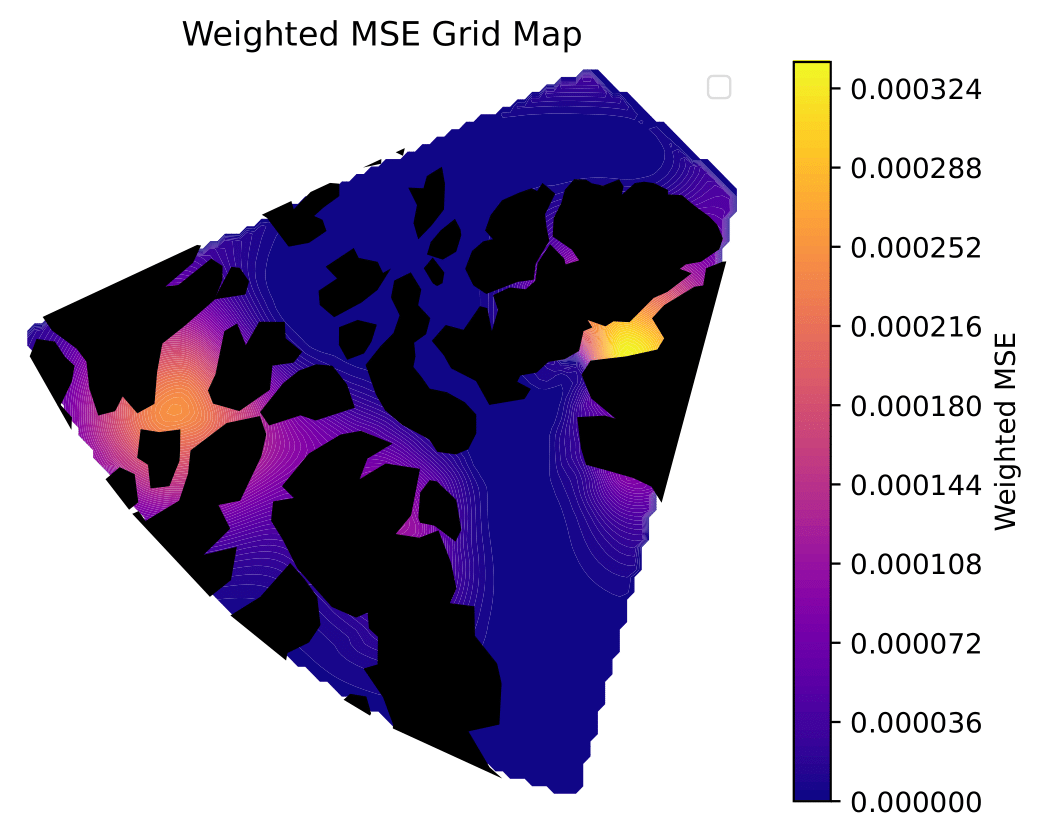}
    }\hfill
    \subfloat[Mean squared error map for a higher-budget trajectory.]{
        \includegraphics[width=0.45\linewidth]{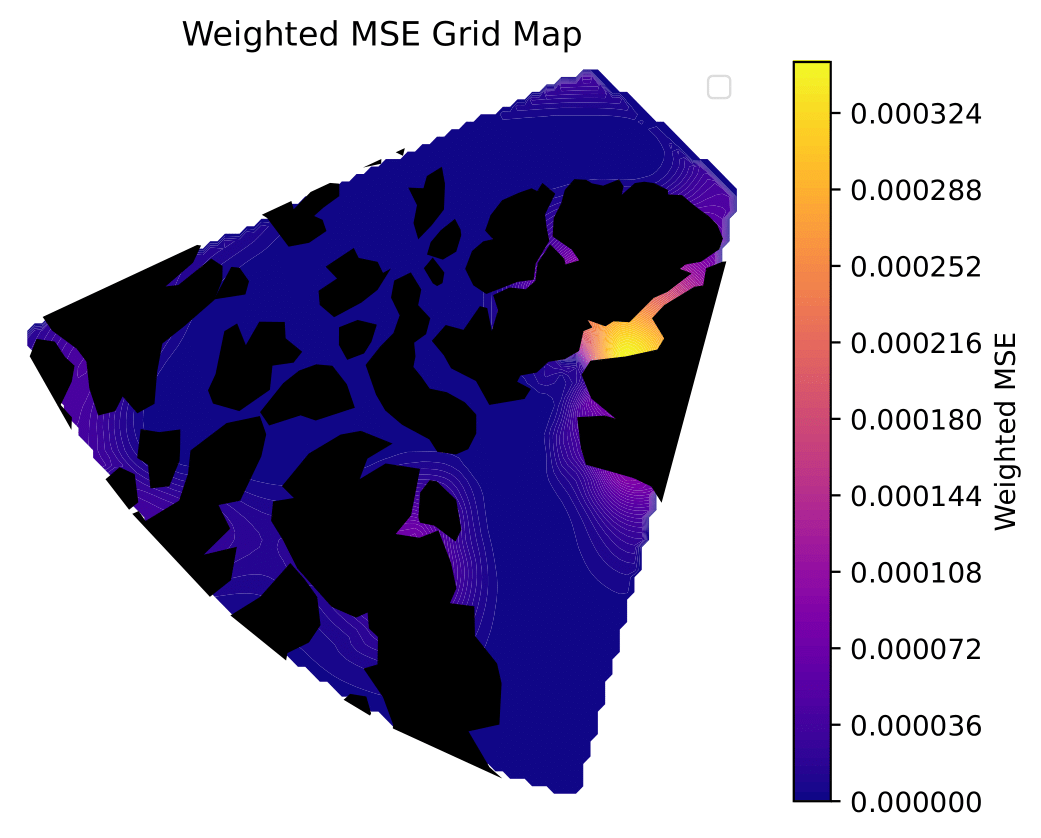}
    }

    \caption{Posterior mean squared error maps under two different path budgets. A higher budget allows for more informative measurements, reducing estimation error. The error scale is small (on the order of \(10^{-4}\)) due to the limited range of the measured quantity, as ice cover values lie between 0 and 1.}
    \label{fig:error_maps}
    \vspace{-10pt}
\end{figure}

\section{Conclusion and Future Work}
This work introduced a hierarchical \gls{ipp} framework that decomposes planning into graph search, budget allocation, and trajectory optimization. Simulations showed that it improves estimation error over purely combinatorial or continuous methods, while also achieving faster runtimes than continuous solvers. Experiments on synthetic and geographic maps with Arctic datasets further demonstrated performance improvements over baselines and faster runtime than continuous space methods.

Future work includes developing adaptive variants that update the \gls{gp} online as new data are collected, particularly when hyperparameters are partially known. We also plan to incorporate stochastic budget constraints, since real traversal costs are often uncertain~\cite{quann2017energy}. Finally, we aim to leverage recent advances in graphs of convex sets~\cite{marcucci2024shortest} to move beyond pre-defined graphs and unify discrete planning and continuous optimization in a single stage.
\label{Conclusion}



\bibliographystyle{IEEEtran}
\bibliography{refs} 

\vspace{12pt}
\color{red}

\end{document}